\newcommand{\cmark}{\ding{51}} 
\newcommand{\xmark}{\ding{55}} 
\newtheorem{theorem}{Theorem}
\begin{document}

\title{Reliable Wireless Indoor Localization via Cross-Validated Prediction-Powered Calibration}

\author{Seonghoon Yoo, \IEEEmembership{Graduate Student Member, IEEE}, 
Houssem Sifaou, \IEEEmembership{Member, IEEE}, \\
Sangwoo Park, \IEEEmembership{Member, IEEE}, 
Joonhyuk Kang, \IEEEmembership{Member, IEEE}, 
and Osvaldo Simeone, \IEEEmembership{Fellow, IEEE}%
\thanks{Seonghoon Yoo and Joonhyuk Kang are with the School of Electrical Engineering, Korea Advanced Institute of Science and Technology, Daejeon 34141, South Korea (e-mail: shyoo902@kaist.ac.kr, jhkang@ee.kaist.ac.kr).}%
\thanks{Sangwoo Park is with the King’s Communications, Learning \& Information Processing (KCLIP) Lab, Centre for Intelligent Information Processing Systems (CIIPS), Department of Engineering, King’s College London, WC2R~2LS London, U.K. (e-mail: sangwoo.park@kcl.ac.uk).}
\thanks{H. Sifaou and O. Simeone are with the Institute for Intelligent Networked Systems, Northeastern University London, One Portsoken Street, London, E1 8PH, UK (email: o.simeone@northeastern.edu, h.sifaou@nulondon.ac.uk)}
\thanks{This work was supported by the Institute of Information \& communications Technology Planning \& Evaluation (IITP) grant funded by the Korea government (MSIT) (No. RS-2024-00444230, Development of Wireless Technology
for Integrated Sensing and Communication). This work was also supported by the Institute
of Information \& Communications Technology Planning \& Evaluation (IITP) under 6G $\cdot$ Cloud Research and Education Open Hub grant funded by the Korea government (MSIT) (IITP-2026-RS-2024-00428780); the work of O. Simeone was supported by the European Research Council (ERC) under the European Union’s Horizon Europe Programme (grant agreement No. 101198347), by an Open Fellowship of the EPSRC (EP/W024101/1), and by the EPSRC project (EP/X011852/1). \textit{(Corresponding author: Joonhyuk Kang and Osvaldo Simeone)}}
}

\markboth{IEEE XXX,~Vol.~XX, No.~XX, XXX~2026}%
{Yoo \MakeLowercase{\textit{et al.}}: Reliable Wireless Indoor Localization via Cross-Validated Prediction-Powered Calibration}

\maketitle

\begin{abstract}
Wireless indoor localization using predictive models with received signal strength information (RSSI) requires proper calibration for reliable position estimates. One remedy is to employ synthetic labels produced by a (generally different) predictive model. But fine-tuning an additional predictor, as well as estimating residual bias of the synthetic labels, demands additional data, aggravating calibration data scarcity in wireless environments. This letter proposes an approach that efficiently uses limited calibration data to simultaneously fine-tune a predictor and estimate the bias of synthetic labels, yielding prediction sets with rigorous coverage guarantees. Experiments on a fingerprinting dataset validate the effectiveness of the proposed method.
\end{abstract}

\begin{IEEEkeywords}
Wireless indoor localization, fingerprint database, semi-supervised learning, risk-controlling prediction sets, prediction-powered inference, cross-validation.
\end{IEEEkeywords}

\section{Introduction}

\IEEEPARstart{W}{ireless} indoor localization has become a fundamental component of next-generation wireless networks, enabling a wide range of location-aware services such as navigation, tracking, and context-aware communications \cite{Yang2015Indoor_positioning, Koo2011CL_wifi_localize}. Recently, data-driven predictive localization methods have attracted significant attention for their adaptability and high positioning accuracy in complex propagation environments \cite{Zhu2024WCL_Self_supervised, Yan2021CL_ELM}. 

As illustrated in Fig.~\ref{fig_system}, for many location-aware services, it is essential not only to produce a nominal estimated position, but also to rigorously quantify the uncertainty of that estimate \cite{trevlakis2023localization,Hou2025TWC_PPI, Zheng2023TWC_confidence}. Distribution-free calibration methods such as conformal prediction \cite{angelopoulos2024theoretical} and \emph{risk-controlling prediction sets} (RCPSs) \cite{Bates2021RCPS} offer rigorous error guarantees. However, these methods rely on the availability of \emph{labeled calibration data points}, which may be scarce. Specifically, in the case of wireless localization, collecting labeled data generally requires expensive measurement campaigns.  

When \emph{unlabeled data} are available, one can construct a synthetic dataset with  \emph{pseudo-labels} generated by an existing predictive model. \emph{Prediction-powered inference} (PPI) \cite{Anastasios2023PPI} is a recently proposed framework for incorporating model-generated pseudo-labels from an auxiliary predictor,  while preserving statistical validity. While PPI applies to parameter estimation or learning, the work \cite{Romano2024RCPS_PPI} adapted PPI as a mechanism to construct prediction sets via RCPS  using both real and synthetic labels---an approach referred to henceforth as RCPS-PPI. 

\begin{figure}
\centering
\includegraphics[width=1\linewidth]{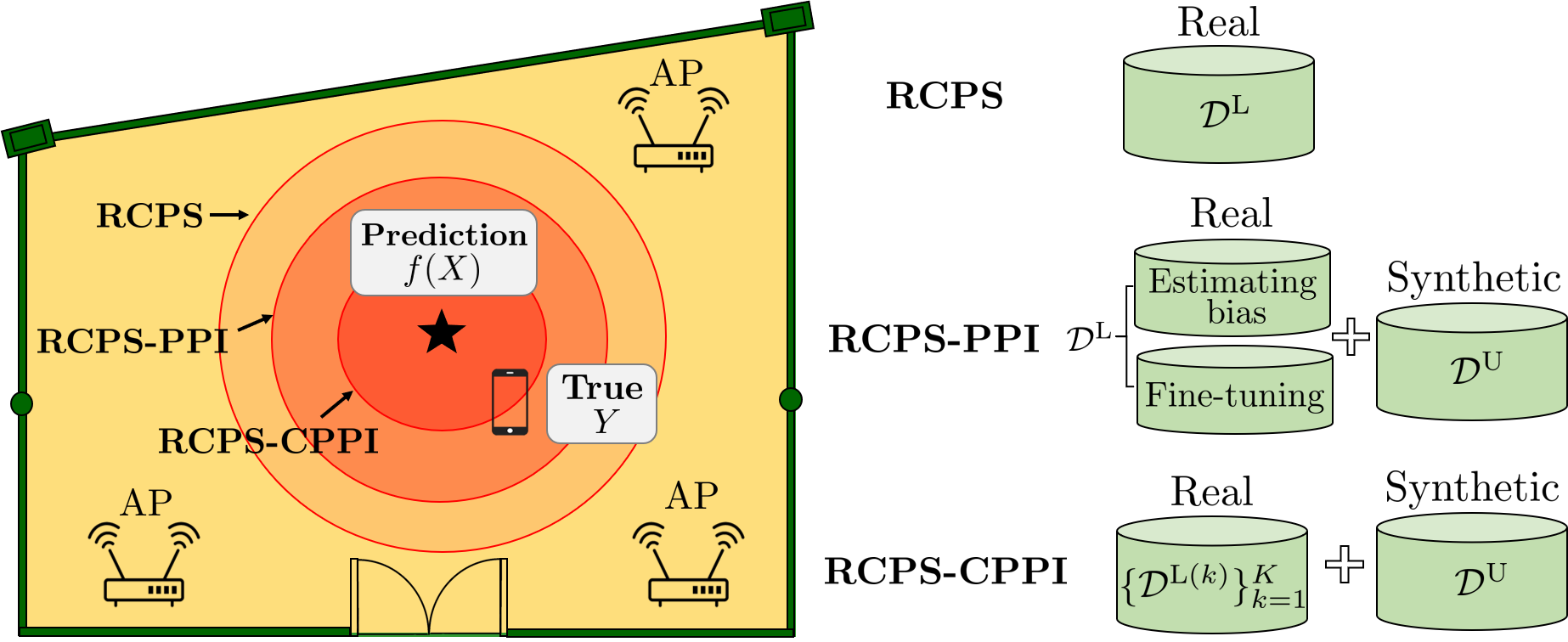}
\caption{Comparison of risk-controlling prediction sets (concentric circles) for the task of wireless indoor localization of mobile devices under three calibration strategies: RCPS \cite{Bates2021RCPS}, which uses only real-world labeled data; RCPS-PPI \cite{Romano2024RCPS_PPI}, which splits the real data into two subsets for fine-tuning the label-generating predictive model and for estimating the bias caused by synthetic labels; and the proposed RCPS-CPPI, which uses the entire labeled dataset for both predictor fine-tuning and bias estimation via cross-validation (see Table \ref{table}).}
\label{fig_system}
\end{figure}

\begin{table}[t]
\centering
\footnotesize
\caption{Comparison of state-of-the-art schemes.}
\label{table}
\renewcommand{\arraystretch}{1.1}
\setlength{\tabcolsep}{3.5pt}
\begin{tabular}{l|c|c|c}
{Scheme} & {Prediction / Learning} & {Synthetic data} & {Cross-validation} \\
\hline
RCPS \cite{Bates2021RCPS}           & Prediction & \xmark & \xmark \\
PPI \cite{Anastasios2023PPI}          & Learning   & \cmark & \cmark \\
RCPS-PPI \cite{Romano2024RCPS_PPI} & Prediction & \cmark & \xmark \\
CPPI \cite{Zrnic2024CPPI}, [16]  & Learning   & \cmark & \cmark \\
XB \cite{park2023few}             & Prediction & \xmark & \cmark \\
\textbf{RCPS-CPPI}                 & Prediction & \cmark & \cmark \\
\end{tabular}
\end{table}

In RCPS-PPI, a portion of the dataset must be set aside to train or fine-tune the auxiliary label-generating predictor on the given inference task, while the remaining portion of the labeled dataset is used for bias correction. Given limited labeled data, such splitting is inefficient and can degrade performance. 

\emph{Cross-PPI} (CPPI) \cite{Zrnic2024CPPI} was recently proposed to overcome this issue. CPPI uses $K$-fold cross-validation to utilize all labeled samples for both predictor training and calibration. However, while the use of CPPI is currently limited to learning, cross-validation was applied to set prediction in \cite{park2023few}. A summary of the state of the art can be found in Table \ref{table}.

In this letter, we develop a calibration scheme that enhances the efficiency of prediction sets by leveraging synthetic pseudo-labels. The proposed method, termed {RCPS-CPPI}, uses the available labeled data to simultaneously fine-tune a predictor and estimate the bias of the pseudo-labels, yielding prediction sets that satisfy a target risk level with a user-defined confidence probability. While RCPS can be applied to any inference task, we validate it on an indoor localization task \cite{dataset2014Torres}, demonstrating valid coverage with significantly reduced prediction set size compared to baseline approaches.

\section{Problem Definition}
\subsection{System Model}

As a concrete inference problem, consider a received signal strength indicator (RSSI) fingerprint-based indoor localization system, as illustrated in Fig.~\ref{fig_system}. In an indoor environment, multiple access points (APs) are deployed at fixed and known positions to periodically broadcast WiFi beacons, while a user equipment (UE) serves as the target node whose wireless signals are observed at the APs for localization. Fingerprint-based localization exploits the RSSI measurements collected from multiple APs to infer the position of the UE.

In practical deployments, RSSI fingerprints are significantly influenced by multipath propagation, shadowing, and device heterogeneity, resulting in complex and highly nonlinear relationships between signal strength and spatial position \cite{Zhang2023Indoor_TWC}. To cope with these challenges, model-based localization frameworks leverage labeled datasets of RSSI fingerprints to train a mapping between RSSI and UE locations.

\subsection{Risk-Controlling Prediction Sets}
We consider a general inference setting characterized by an input $X$, taking values in an arbitrary space, and an output $Y\in \mathcal{Y}$, where the domain $\mathcal{Y}$ may be discrete, for classification, or continuous, for regression. For the localization problem, the feature vector $X \in \mathbb{R}^m$ represents the RSSI readings from $m$ APs, whereas the target variable $Y \in \mathbb{R}^2$ denotes the two-dimensional user position, typically expressed in longitude and latitude coordinates. We are interested in quantifying the uncertainty of a pre-trained model $f(\cdot)$. 

Specifically, we aim at augmenting a decision $f(X)$ for any input $X$  with a prediction set $\Gamma_{\lambda}(X) \subseteq \mathcal{Y}$, depending on a threshold parameter $\lambda \in [0, \lambda_{\max}]$, that satisfies given statistical guarantees. The statistical performance of the set $\Gamma_{\lambda}(X) \subseteq \mathcal{Y}$ is measured by a loss function $\ell(Y,\Gamma_{\lambda}(X))$, such as the miscoverage loss 
\begin{equation}
    \label{miscoverage_loss}
    \ell(Y,\Gamma_{\lambda}(X))=\mathbbm{1}\{\,Y \notin \Gamma_{\lambda}(X)\,\},
\end{equation} where $\mathbbm{1}\{\cdot\}$ is the indicator function. In the example of Fig. 1, this loss provides a direct measure of the performance of the prediction set for localization by returning $\ell(Y,\Gamma_\lambda(X))=0$ if the prediction set $\Gamma_\lambda(X)$ includes the true UE location $Y$, while yielding $\ell(Y,\Gamma_\lambda(X))=1$ otherwise.

This way the expected risk
\begin{equation}
    R(\lambda) \;=\; \mathbb{E}_{P_{XY}}\big[\,\ell(Y,\Gamma_{\lambda}(X))\,\big]\,,
\end{equation}where the expectation is over the distribution $P_{XY}$ of the test data $(X,Y)$, 
measures the probability that the prediction set fails to include the true UE location. The target statistical requirement is that this expected risk is no larger than a  threshold $\alpha$ with probability at least $1-\delta$, i.e.,  \begin{equation}
    \Pr\!\big\{\,R(\hat{\lambda}) \le \alpha\,\big\} \ge 1-\delta\,,
    \label{rcps_def}
\end{equation} where probabilities $\alpha$ and $\delta$ are user-defined. As discussed in the next subsection, in \eqref{rcps_def}, the probability is taken with respect to the calibration data used to generate the prediction set $\Gamma_{\lambda}(X)$. If the condition \eqref{rcps_def} is satisfied, we say that the set $\Gamma_{\lambda}(X)$ is an $(\alpha,\delta)$-reliable prediction set.  

The general form of the prediction set $\Gamma_{\lambda}(X)$ as a function of the threshold $\lambda$ is given by \cite{angelopoulos2024theoretical} \begin{equation}
    \Gamma_{\lambda}(X) = \{\hat{Y}\in\mathcal{Y}: S(\hat{Y},f(X)) \le \lambda \}, 
    \label{eq:ball}
\end{equation}
where $S(\hat{Y},f(X))$ is an error score. By \eqref{eq:ball}, the prediction set includes all possible UE locations $\hat{Y}\in\mathcal{Y}$ whose error $S(\hat{Y},f(X))$ with respect to the prediction $f(X)$ is no larger than the threshold $\lambda$. For the example in Fig. \ref{fig_system}, which amounts to a multivariate regression problem, a typical choice for the score function is the Euclidean distance
\begin{equation}
S(\hat{Y},f(X))=\lVert \hat{Y}-f(X)\rVert_2
\label{euclidean_dis}
\end{equation}
between UE position $\hat{Y}$ and model prediction  $f(X)$. With this choice, the prediction set 
$\Gamma_{\lambda}(X)$ in \eqref{eq:ball} is a ball centered at the prediction   $f(X)$ with radius $\lambda$ as in Fig. \ref{fig_system}.

Following prior art \cite{Bates2021RCPS, Romano2024RCPS_PPI}, we make the following technical assumptions. The loss function $\ell(Y,\Gamma_{\lambda}(X))$ is bounded between 0 and 1; is non-increasing as the prediction set grows---i.e., for $\lambda' \leq \lambda$, and hence for $\Gamma_{\lambda}(X) \supseteq \Gamma_{\lambda'}(X)$, we have the inequality $
    \ell(Y,\Gamma_{\lambda}(X)) \le \ell(Y,\Gamma_{\lambda'}(X))\,$; and we have $\ell\big(Y,\Gamma_{\lambda_{\max}}(X)\big)=0$. These assumptions are satisfied by the miscoverage loss \eqref{miscoverage_loss}, as well as other standard losses, including the false negative rate in multi-label classification \cite{Bates2021RCPS}.

\subsection{Calibration Data}
As in \cite{Bates2021RCPS, Romano2024RCPS_PPI}, in order to determine the threshold $\lambda$ to be used in the prediction set $\Gamma_{\lambda}(X)$,  we assume the availability of a labeled dataset $\mathcal{D}^{\text{L}} = \{(X_i, Y_i)\}_{i=1}^{n}$ consisting of  $n$ i.i.d. samples drawn from the joint distribution $P_{XY}$. This is referred to as the \emph{labeled calibration dataset}.

Furthermore, as in \cite{ Romano2024RCPS_PPI}, we also assume that along with the labeled calibration dataset $\mathcal{D}^{\text{L}}$, we can access an \emph{unlabeled calibration dataset} $\mathcal{D}^{\text{U}} = \{\tilde{X}_j\}_{j=1}^{N}$ of $N$ i.i.d. input samples drawn from the marginal distribution $P_X$. The number of unlabeled calibration data points, $N$, is  considered to much larger than the number of labeled data points, $n$, i.e.,  $N \gg n$. Such unlabeled data naturally arises in wireless localization scenarios, where the dataset may include only RSSI measurements collected by participating users without sharing their position information \cite{Sifaou2025CPPI}.

\section{Background}
\subsection{Risk-Controlling Prediction Sets based on Real Data}
\label{Conventional}
To ensure the requirement  \eqref{rcps_def}, the RCPS approach \cite{Bates2021RCPS}  first constructs an upper confidence bound (UCB) $\hat{R}^{+}(\lambda)$ on the risk $R(\lambda)$ using the labeled calibration dataset $\mathcal{D}^\text{L}$. Then, it selects the smallest threshold $\lambda$  such that the UCB does not exceed the target risk level $\alpha$. 

Formally, let $\ell^{\text{L}}_{i}(\lambda) = \ell(Y_i,\Gamma_{\lambda}(X_i))$ be the loss on the $i$-th labeled sample for $i=1,\ldots,n$, so that the resulting empirical risk on the labeled data  is given by 
\begin{equation}
    \hat{R}^{\text{L}}(\lambda) = \frac{1}{n}\sum_{i=1}^{n} \ell^{\text{L}}_{i}(\lambda)\,.
    \label{Conventional_eq}
\end{equation} Using this empirical estimate, an UCB $\hat{R}^{+}(\lambda)$ can be obtained that satisfies the inequality 
\begin{equation}
    \Pr\!\big\{\,R(\lambda) \le \hat{R}^{+}(\lambda)\,\big\} \ge 1-\delta\,,
    \label{pointwise_UCB}
\end{equation}
where the probability is over the calibration dataset $\mathcal{D}^\text{L}$. The UCB can be constructed by leveraging the boundedness of the loss via methods such as the Waudby-Smith-Ramdas (WSR) estimator \cite{WSR2023WSR}. Finally, RCPS chooses the threshold as
\begin{equation}
    \hat{\lambda} \;=\; \inf\big\{\, \lambda : \hat{R}^+(\lambda) < \alpha\big\}\,, 
    \label{best_lambda}
\end{equation}
ensuring that the resulting set $\Gamma_{\hat{\lambda}}(X)$ is $(\alpha,\delta)$-reliable \cite{Bates2021RCPS}.

\subsection{PPI-based Risk-Controlling Prediction Sets}
\label{PPI}
Assume now access not only to the labeled calibration dataset $\mathcal{D}^\mathrm{L}$, but  also to the larger unlabeled dataset $\mathcal{D}^\mathrm{U}$. Assume also that we have an auxiliary parameterized predictor $g_{\theta}(X)$ providing estimates of label $Y$ for any given input $X$. 

The predictor $g_{\theta}(X)$ generally needs to be fine-tuned to provide accurate pseudo-labels on the given task. For example, the predictor $g_{\theta}(X)$ could be obtained from a foundation model pre-trained on a mixture of different datasets. For fine-tuning, RCPS-PPI \cite{Romano2024RCPS_PPI} uses part of the labeled data, $\mathcal{D}^{\mathrm{L}}_{\mathrm{ft}}$, reserving the rest of the labeled dataset, $\mathcal{D}^{\mathrm{L}}_{\mathrm{bc}}=\mathcal{D}^{\mathrm{L}}\setminus\mathcal{D}^{\mathrm{L}}_{\mathrm{ft}}$, for bias correction,  as explained next. 

For each unlabeled calibration input $\tilde{X}_j \in \mathcal{D}^{\text{U}}$, RCPS-PPI generates a pseudo-label $\hat{Y}_j = g_{\theta}(\tilde{X}_j)$ and evaluates the loss $\ell^{\text{U}}_{j}(\lambda) = \ell(g_{\theta}(\tilde{X}_j), \Gamma_{\lambda}(\tilde{X}_j))$ for $j=1,\ldots,N$. With these losses, one can estimate the expected risk via the empirical average  $\sum_{j=1}^{N} \ell^{\text{U}}_{j}(\lambda)/N$, but this estimate is generally biased.

To address this issue, RCPS-PPI introduces a bias correction term evaluated based on the labeled data. Specifically, for each $i$-th labeled data point $(X_i,Y_i)$ in dataset $\mathcal{D}^{\mathrm{L}}_{\mathrm{bc}}$, RCPS-PPI evaluates the difference \begin{equation} \Delta_i(\lambda) = \ell(g_{\theta}(X_i),\,\Gamma_{\lambda}(X_i))-\ell^{\text{L}}_{i}(\lambda) \end{equation} between the loss $\ell(g_{\theta}(X_i),\,\Gamma_{\lambda}(X_i))$ estimated using the prediction $g_{\theta}(X_i)$ and the true loss $\ell^{\text{L}}_{i}(\lambda)$. RCPS-PPI then constructs an estimator for the expected risk by subtracting from the unlabeled empirical loss the average bias correction obtained from labeled data:
\begin{equation}
    \hat{R}_{\text{PPI}}(\lambda) = \frac{1}{N}\sum_{j=1}^{N} \ell^{\text{U}}_{j}(\lambda)-\frac{1}{n_{\mathrm{bc}}}\sum_{i=1}^{n_{\mathrm{bc}}}\Delta_i(\lambda).
    \label{PPI_eq}
\end{equation}
The first term in \eqref{PPI_eq} is the empirical loss on unlabeled data, while the second term is the bias correction obtained using the $n_\mathrm{bc}$ labeled examples in set $\mathcal{D}^{\mathrm{L}}_{\mathrm{bc}}$. It can be shown that $\hat{R}_{\text{PPI}}(\lambda)$ is an unbiased estimator of the true risk $R(\lambda)$ \cite{Anastasios2023PPI}.

Similar to RCPS, RCPS-PPI obtains an UCB $\hat{R}^+_{\text{PPI}}(\lambda)$ using the unbiased estimate $\hat{R}_{\text{PPI}}(\lambda)$. The threshold $\hat{\lambda}$ is then evaluated as in \eqref{best_lambda}, ensuring RCPS-PPI is $(\alpha, \delta)$-reliable \cite{Romano2024RCPS_PPI}. 

\section{Prediction-Powered Calibration via Cross-Validation}
\label{sec:CPPI}
As explained in the previous section, RCPS-PPI uses part of the labeled data to train the labeling predictor $g_\theta(X)$. When the number of labeled data, $n$, is small, dedicating some of the data to this purpose may be problematic. CPPI \cite{Zrnic2024CPPI} addresses this issue via a $K$-fold cross-validation strategy for parameter estimation. In this section, we introduce an application of this principle to prediction set calibration.

\subsection{Cross-Validation-based Risk Estimate}
In the proposed RCPS-CPPI, the labeled calibration set $\mathcal{D}^{\text{L}}$ is partitioned into $K$ disjoint folds $\mathcal{D}^{\text{L}(1)}, \ldots, \mathcal{D}^{\text{L}(K)}$, each of size $n/K$. For each fold $k=1,\ldots,K$, we train a predictor $g_{\theta}^{(k)}(X)$ on the remaining $K-1$ folds, i.e., on dataset $\mathcal{D}^{\text{L}} \setminus \mathcal{D}^{\text{L}(k)}$. This ensures that all labeled data is used for training, yielding $K$ predictors $\{g_{\theta}^{(k)}(X)\}_{k=1}^K$, each learned on a subset comprising $(K-1)n/K$ labeled points.

Using the $K$ cross-validated predictors, we evaluate an unbiased estimate of the expected risk $R(\lambda)$ by obtaining $K$ unbiased estimates of the form (\ref{PPI_eq}), one for each predictor $g^{(k)}_{\theta}(X)$. In particular, for each fold $k$, we evaluate the loss $
    \ell^{\text{U}(k)}_{j}(\lambda) = \ell(g_{\theta}^{(k)}(\tilde{X}_j),\; \Gamma_{\lambda}(\tilde{X}_j))$ 
on each $j$-th data point $\tilde{X}_j$ in $\mathcal{D}^\text{U}$ using model $g_{\theta}^{(k)}(X)$. Furthermore, we compute a bias correction term $\Delta^{(k)}_{i}(\lambda)=\ell(g_{\theta}^{(k)}(X_i), \Gamma_{\lambda}(X_i))-\ell^{\text{L}}_{i}(\lambda)$ for each data point $(X_i,Y_i)$ in the fold $\mathcal{D}^{\text{L}(k)}$. Note that model $g_{\theta}^{(k)}(X)$ was trained on a dataset that excludes $(X_i,Y_i)$, ensuring that the loss $\ell(g_{\theta}^{(k)}(X_i), \Gamma_{\lambda}(X_i))$ is a valid unbiased estimate for the expected risk of the $k$-th predictor $g_{\theta}^{(k)}(X)$. Finally, RCPS-CPPI constructs the CPPI risk estimate \cite{Zrnic2024CPPI}
\begin{equation}
\hat{R}_{\text{CPPI}}(\lambda) = \frac{1}{K}\sum_{k=1}^{K}\bigg(\frac{1}{N}\sum_{j=1}^{N} \ell^{\text{U}(k)}_{j}(\lambda) - \frac{1}{n}\sum_{i \in \mathcal{D}^{\text{L}(k)}} \Delta^{(k)}_{i}(\lambda)\bigg). 
    \label{CPPI_eq}
\end{equation}
In (\ref{CPPI_eq}), the term $\Delta^{(k)}_{i}(\lambda)$ adjusts for the bias of model $g_{\theta}^{(k)}(X)$. By design, the quantity $\hat{R}_{\text{CPPI}}(\lambda)$ is an unbiased estimator of $R(\lambda)$, and it uses all available labeled samples both in forming the predictors and in correcting bias.
\subsection{RCPS-CPPI}
To obtain an UCB from the CPPI risk estimator \eqref{CPPI_eq}, we rewrite (\ref{CPPI_eq}) as an empirical average of unbiased estimates $\hat{R}^{(k)}_\text{CPPI}(\lambda)$, each obtained by using labeled data from a different fold $\mathcal{D}^{\text{L}(k)}$. To this end, for each fold $k$, we define the term $\hat{R}_\text{CPPI}^{(k)}(\lambda)$ by including the subset of terms in \eqref{CPPI_eq} associated with that fold as
\begin{equation}
    \hat{R}^{(k)}_{\text{CPPI}}(\lambda) = \frac{1}{n_k}\sum_{i=1}^{n_k} \Bigg[\frac{n_k}{N}\sum_{j=(i-1)\frac{N}{n_k}+1}^{\,i\frac{N}{n_k}} \ell^{\text{U}(k)}_{j}(\lambda)-\Delta_{i}^{(k)}(\lambda) \Bigg],
    \label{CPPI_eq2}
\end{equation}
where $n_k = n/K$ is the size of dataset $\mathcal{D}^{\text{L}(k)}$.

Since $\hat{R}_{\text{CPPI}}^{(k)}(\lambda)$ is an unbiased estimator of the expected risk $R(\lambda)$, an UCB ${R^{+\,(k)}_\text{CPPI}}(\lambda)$ can be evaluated using methods, e.g., the WSR estimation \cite{WSR2023WSR}. Specifically, RCPS-CPPI determines an UCB satisfying the inequality $\Pr\{R(\lambda) \le R^{+\,(k)}_{\text{CPPI}}(\lambda)\} \ge 1 - {\delta}/{K}$ for each fold $k$. Finally, RCPS-CPPI evaluates
\begin{equation}
    \hat{R}^{+}_{\text{CPPI}}(\lambda) = \min_{1 \le k \le K}\; R^{+\,(k)}_{\text{CPPI}}(\lambda)\,
    \label{CPPI_eq4}
\end{equation}
and applies selection rule \eqref{best_lambda} using the estimate $\hat{R}^{+}_{\text{CPPI}}(\lambda)$ instead of $\hat{R}^+(\lambda)$. Using grid search over parameter $\lambda$ with $m$ points, the complexity of this operation is of order $\mathcal{O}(K(N+n)m)$.
\subsection{Theoretical Guarantees}
The following theorem formalizes the coverage guarantee of RCPS-CPPI.
\begin{theorem}
RCPS-CPPI produces an $(\alpha,\delta)$-reliable prediction set.
\end{theorem}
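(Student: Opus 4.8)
The plan is to reduce the claim to two ingredients. First, I would show that, for each fixed threshold $\lambda$, the quantity $\hat{R}^{+}_{\text{CPPI}}(\lambda)$ in \eqref{CPPI_eq4} is a valid upper confidence bound on the risk $R(\lambda)$ at the full confidence level $1-\delta$, i.e.\ it satisfies the analogue of \eqref{pointwise_UCB}. Second, I would invoke the RCPS calibration argument of \cite{Bates2021RCPS}: given \emph{any} such pointwise-valid UCB together with a loss that is bounded in $[0,1]$ and non-increasing in $\lambda$ (so that $R(\lambda)$ is itself non-increasing), the selection rule \eqref{best_lambda}---applied here with $\hat{R}^{+}_{\text{CPPI}}$ in place of $\hat{R}^{+}$---returns a threshold $\hat{\lambda}$ satisfying $\Pr\{R(\hat{\lambda})\le\alpha\}\ge 1-\delta$. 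Both structural assumptions on the loss are granted in the problem setup, so the second ingredient is immediate and the real work lies in the first.

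For the first ingredient, I would fix $\lambda$ and a fold $k$ and condition on the trained predictor $g^{(k)}_{\theta}$. The key point is that $g^{(k)}_{\theta}$ is a function only of $\mathcal{D}^{\text{L}}\setminus\mathcal{D}^{\text{L}(k)}$, hence is independent of the held-out fold $\mathcal{D}^{\text{L}(k)}$ and of the unlabeled set $\mathcal{D}^{\text{U}}$, whose samples are themselves i.i.d.\ from $P_{XY}$ and $P_X$, respectively. Reading off the representation \eqref{CPPI_eq2}, I would then argue that, conditionally on $g^{(k)}_{\theta}$, the $n_k$ bracketed summands are i.i.d.; each lies in a fixed bounded interval, since $\ell\in[0,1]$ forces every block-average of unlabeled losses into $[0,1]$ and every correction term $\Delta^{(k)}_{i}(\lambda)$ into $[-1,1]$; and each has conditional mean equal to $R(\lambda)$, because the ``synthetic risk'' $\mathbb{E}[\ell(g^{(k)}_{\theta}(X),\Gamma_\lambda(X))\mid g^{(k)}_{\theta}]$ contributed by the unlabeled block cancels the identical quantity appearing inside the bias-correction term, leaving exactly $R(\lambda)$. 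Hence $\hat{R}^{(k)}_{\text{CPPI}}(\lambda)$ is an unbiased estimator of $R(\lambda)$ built from i.i.d.\ bounded terms, so---after an affine rescaling of those terms to $[0,1]$---the WSR construction \cite{WSR2023WSR} applied at level $\delta/K$ yields a bound with $\Pr\{R(\lambda)\le R^{+\,(k)}_{\text{CPPI}}(\lambda)\mid g^{(k)}_{\theta}\}\ge 1-\delta/K$, and taking expectation over $g^{(k)}_{\theta}$ preserves this unconditionally.

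To finish the first ingredient, I would combine folds via a union bound: since $\hat{R}^{+}_{\text{CPPI}}(\lambda)=\min_{1\le k\le K}R^{+\,(k)}_{\text{CPPI}}(\lambda)$, the failure event $\{R(\lambda)>\hat{R}^{+}_{\text{CPPI}}(\lambda)\}$ coincides with $\bigcup_{k=1}^{K}\{R(\lambda)>R^{+\,(k)}_{\text{CPPI}}(\lambda)\}$, which has probability at most $K\cdot(\delta/K)=\delta$. Thus $\hat{R}^{+}_{\text{CPPI}}(\lambda)$ is a valid $(1-\delta)$-UCB for every fixed $\lambda$, and combining this with the RCPS selection guarantee of \cite{Bates2021RCPS} establishes that $\Gamma_{\hat{\lambda}}(X)$ is $(\alpha,\delta)$-reliable.

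I expect the main obstacle to be the conditional-independence bookkeeping in the first ingredient: one must verify carefully that conditioning on $g^{(k)}_{\theta}$ genuinely turns the fold-$k$ labeled points and the unlabeled points into fresh i.i.d.\ draws---which uses both the disjointness of the $K$ training folds and the fact that $\mathcal{D}^{\text{U}}$ never enters training---and that reusing the \emph{same} unlabeled set across all $K$ folds is harmless. The latter holds precisely because the argument needs only per-fold validity followed by a union bound, never any independence between folds. A comparatively minor technical point is to pin down the exact bounded interval containing the summands of \eqref{CPPI_eq2}, which is all that is required for the WSR bound to apply.
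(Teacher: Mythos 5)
Your proposal is correct and follows essentially the same route as the paper's proof: a per-fold WSR upper confidence bound at level $\delta/K$ built from the unbiased, bounded, conditionally i.i.d.\ summands of \eqref{CPPI_eq2}, combined across folds by a union bound on the minimum in \eqref{CPPI_eq4}, and then passed through the standard RCPS selection argument of \cite{Bates2021RCPS}. You simply make explicit two steps the paper leaves implicit (the conditional-independence/unbiasedness bookkeeping for each fold and the final invocation of the RCPS calibration lemma), which is a faithful elaboration rather than a different approach.
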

\begin{proof}
    Let $\mathcal{E}_k = \big \{R(\lambda) \leq R^{+\,(k)}_{\textnormal{CPPI}}(\lambda)\}$. By the union bound over the $K$ events $\{\mathcal{E}_k\}_{k=1}^{K}$, we obtain 
        \begin{equation}
            \textnormal{Pr}\bigg\{\bigcup_{k=1}^{K}\mathcal{E}_k^{c}\bigg\}\leq \sum_{k=1}^{K}\textnormal{Pr}\big\{\mathcal{E}_k^{c}\big\} \leq K\cdot \frac{\delta}{K}=\delta,
        \end{equation}
        which implies
\begin{equation}
    \textnormal{Pr}\bigg\{\,\bigcap_{k=1}^{K}\mathcal{E}_k\,\bigg\}=\textnormal{Pr}\bigg\{\,R(\lambda)\leq\min_{1 \le k \le K}\; R^{+\,(k)}_{\text{CPPI}}(\lambda)\,\bigg\}\geq 1-\delta.
\end{equation}
Therefore, the RCPS-CPPI is $(\alpha,\delta)$-reliable.
\end{proof}



\section{Experiments}
\label{Experiments}
\subsection{Setup}
We evaluate the proposed approach on a wireless indoor localization task using a WiFi fingerprinting dataset \cite{dataset2014Torres, Cheng2022FL_WCL}. We randomly sample a subset of 100 labeled examples to train the base model $f$ and simulate scenarios with limited calibration datasets $\mathcal{D}^\text{L}$, with $n$ varying from 50 up to 500 labeled calibration points. The remaining data are used as an unlabeled calibration dataset $\mathcal{D}^{\text{U}}$. We reserve 30 labeled samples for a test set to evaluate coverage and set size \cite{yoo2025ml_wcp}. 

Following the setup in \cite{Sifaou2025CPPI}, we adopt an extreme learning machine (ELM) regressor as the base model $f(X)$ to be calibrated. We adopt the Euclidean-distance score (\ref{euclidean_dis}) and the miscoverage loss (\ref{miscoverage_loss}). The auxiliary predictor $g_{\theta}(X)$ is implemented as a fully-connected neural network with three hidden layers. We set the target risk level to $\alpha = 0.1$ and confidence to $1-\delta = 0.9$ for all calibration methods. We consider $K=5$ folds for the CPPI method by default.

\subsection{Results}
We report the empirical coverage, i.e., the fraction of test points whose true location lies inside the prediction set, and the \emph{inefficiency}, defined as the average radius of the prediction sets $\Gamma_{\hat{\lambda}}(X)$, on the test samples. A larger inefficiency implies a more significant uncertainty on the position of the UE, which in turn affects the performance of location-based functionalities such as beamforming and resource allocation \cite{trevlakis2023localization}. Apart from RCPS (Section~\ref{Conventional}), and RCPS-PPI (Section~\ref{PPI}), we also consider a baseline semi-supervised (SS) scheme that uses both labeled and unlabeled data without any bias correction and directly applies RCPS on the combined data.

\begin{figure}[t]
   \hspace*{-6pt}
   \includegraphics[width=1.15\linewidth]{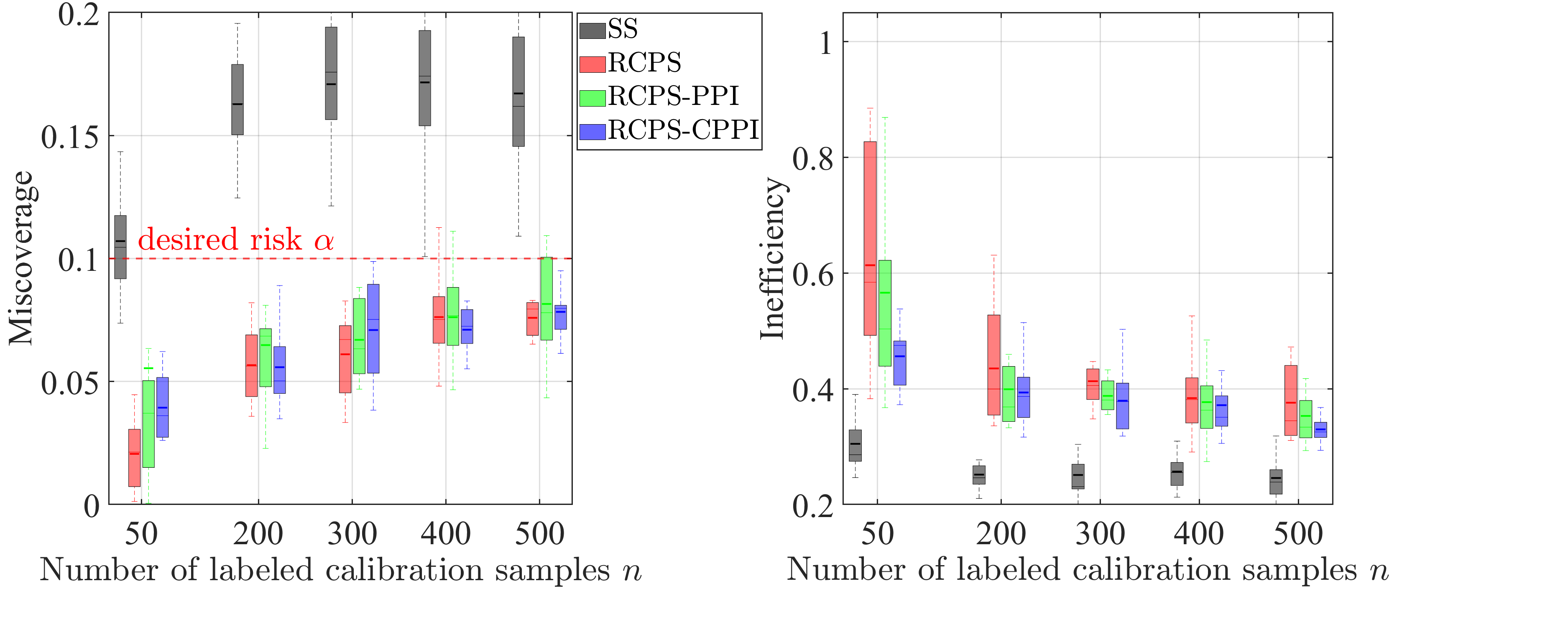}
   \caption{Empirical coverage and inefficiency of SS, RCPS, RCPS-PPI, and RCPS-CPPI versus the number of labeled calibration samples $n$ for target risk $\alpha=0.1$ and confidence $\delta=0.1$ ($N=15650$, $K=5$).}
   \label{number_of_labeled}
\end{figure}
Fig.~\ref{number_of_labeled} shows the coverage and inefficiency versus the number of labeled calibration samples $n$. All methods maintain coverage at or above the $90\%$ target for the range of $n$ tested, but their set sizes differ considerably. With very few labeled samples, RCPS produces large prediction sets to meet the risk requirement, while incorporating unlabeled data can reduce the set size. For example, at $n=50$ labeled samples, RCPS-CPPI’s sets are about 30\% smaller than those of RCPS. 
\begin{figure}[t]
   \hspace{2pt}
   \includegraphics[width=1.18\linewidth]{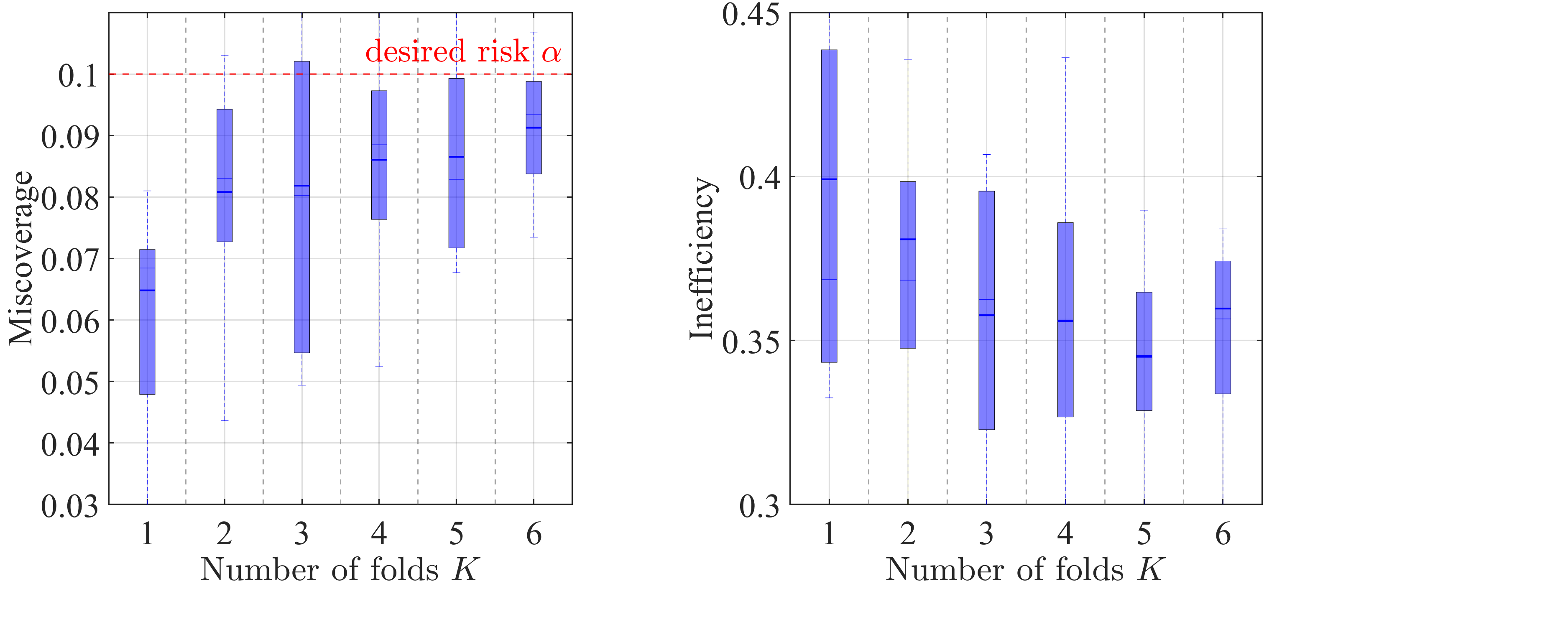}
   \caption{Empirical coverage and inefficiency of RCPS-CPPI as a function of the number of folds $K$, for $\alpha=0.1$ and $\delta=0.1$ ($N=15650$, $K=5$).}
   \label{number_of_folds}
\end{figure}

In Fig.~\ref{number_of_folds}, we examine the effect of the number of folds, $K$, on the performance of RCPS-CPPI. The total number of labeled and unlabeled calibration samples is fixed. We observe that RCPS-CPPI maintains valid coverage around the 90\% level for all values of $K$. Furthermore, the inefficiency tends to decrease as $K$ increases, since using more folds supports training the prediction model on a larger portion of the labeled data. The marginal gain from increasing $K$ diminishes once each model uses most of the data, e.g., beyond $K=5$ or $10$ in our experiments. Importantly, even for moderate values like $K=5$, RCPS-CPPI already provides a substantial improvement over the case $K=1$, which corresponds to RCPS-PPI. In practice, one can choose the number of folds, $K$, in a range that balances computational overhead with the benefits of increased training data per fold. Our results suggest that a small $K$ (e.g., 5) is often sufficient.

\begin{figure}[t]
   \hspace*{-5pt}
   \includegraphics[width=1.17\linewidth]{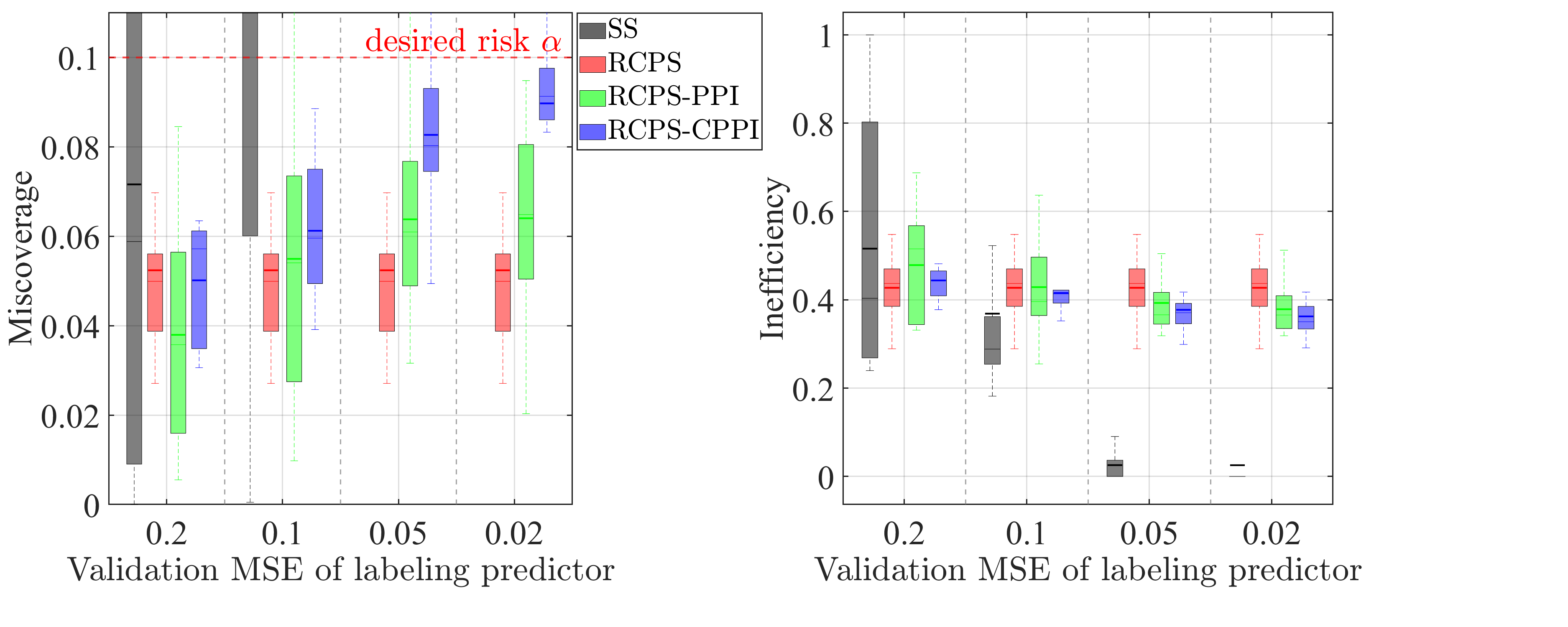}
   \caption{Empirical coverage and inefficiency of SS, RCPS, RCPS-PPI, and RCPS-CPPI versus the validation MSE of the labeling predictor $(N=15650, n=200, K=5)$.}
   \label{mse}
\end{figure}
Finally, Fig.~\ref{mse} illustrates the impact of the labeling predictor's accuracy on calibration performance. We plot the coverage and inefficiency of each method versus the mean squared error (MSE) of the predictor, measured on a validation set, in predicting $Y$. We vary the predictor’s accuracy by training with different amounts of data. The results show that the SS method, which blindly trusts the predictor, starts to exhibit under-coverage, dropping below the 90\% line, because the pseudo-labels are often incorrect. RCPS-PPI is more robust due to bias correction, but its coverage can still falter for high MSE values. In contrast, RCPS-CPPI maintains coverage near the target across the entire range of predictor qualities. In the worst case where the predictor is uninformative, RCPS-CPPI’s procedure essentially falls back to the conventional RCPS using the labeled set, thus ensuring valid risk control.

\section{Conclusion}
Wireless indoor localization increasingly relies on predictive models for accurate position estimation, making reliable uncertainty quantification essential for trustworthy and risk-aware operation. We presented RCPS-CPPI, a cross-validation-based semi-supervised calibration method that improves the sample efficiency of risk-controlling prediction sets. Leveraging $K$-fold cross-prediction, the method fine-tunes a predictor on all available labeled data while obtaining unbiased estimates from unlabeled data. We derived a rigorous confidence bound for the CPPI risk estimator. Experiments on a fingerprinting dataset demonstrated that RCPS-CPPI achieves target coverage with significantly smaller prediction sets compared to conventional RCPS and RCPS-PPI. Further directions for future research may include extending RCPS-CPPI to support online and adaptive calibration for time-varying wireless channels, and enabling real-time reliability assurance in dynamic indoor localization environments.

\bibliographystyle{IEEEtran}
\bibliography{ref}

\end{document}